\title{Information Consistent Pruning: How to Efficiently Search for Sparse Networks?}
\author{Soheil Gharatappeh\inst{1} \and 
  Salimeh Yasaei Sekeh\inst{2} 
  }
\authorrunning{S. Gharatappeh et al.}
\institute{
  School of Computing and Information Science at University of Maine \and
  Department of Computer Science at San Diego State University 
    \email{soheil.gharatappeh@maine.edu},
    \email{ssekeh@sdsu.edu}
}
\newcommand{\ourmethod}{InCoP}
\newcommand{\Real}{\mathbb{R}}
\newcommand{\Ex}[2]{\mathbb{E}_{#1}\left[\, #2 \,\right]}
\newcommand{\loss}{\ell}
\begin{document}

\maketitle

\begin{abstract}
Iterative magnitude pruning methods (IMPs), proven to be successful in reducing the number of insignificant nodes in over-parameterized deep neural networks (DNNs), have been getting an enormous amount of attention with the rapid deployment of DNNs into cutting-edge technologies with computation and memory constraints.
Despite IMPs popularity in pruning networks, a fundamental limitation of existing IMP algorithms is the significant training time required for each pruning iteration.
Our paper introduces a novel \textit{stopping criterion} for IMPs that monitors information and gradient flows between networks layers and minimizes the training time.
Information Consistent Pruning (\ourmethod{}) eliminates the need to retrain the network to its original performance during intermediate steps while maintaining overall performance at the end of the pruning process.
Through our experiments, we demonstrate that our algorithm is more efficient than current IMPs across multiple dataset-DNN combinations.
We also provide theoretical insights into the core idea of our algorithm alongside mathematical explanations of flow-based IMP. Our code is available at \url{https://github.com/Sekeh-Lab/InfCoP}.

\end{abstract}

\section{Introduction}

Over-parameterized deep neural networks (DNNs) have emerged at the forefront of cutting-edge technologies, demonstrating success in areas such as computer vision\cite{krizhevsky2012imagenet}, speech recognition~\cite{hinton2012deep}, and large language models \cite{otter2020survey}. As these networks typically consist of millions of parameters, they demand significant computational power, making them unsuitable for edge devices with limited resources. Initially, researchers sought to optimize network performance by experimenting with architectures~\cite{pouyanfar2018survey} and incorporating techniques such as BatchNorm~\cite{ioffe2015batch}, dropout \cite{srivastava2014dropout}, and residual layers \cite{he2016deep}.
As networks were able to handle highly complex problems and their performance reached a plateau, the focus shifted towards optimizing network efficiency.

Pruning stands out as a prominent approach to learning subnetworks~\cite{ganesh2021mint,ganesh2024slimming}. These methods are capable of delivering comparable performance to the original dense network while significantly enhancing their training time \cite{dettmers2019sparse} and inference time \cite{luo2017thinet}. Pruning makes DNNs applicable for deployment on resource-constrained devices \cite{frankle-2018-lotter-ticket-hypot,han-2015-learn-both}. These techniques have been shown superior generalization error, attributed to their inherent regularization effect~\cite{lecun1989optimal,han-2015-learn-both,wen2016learning,you2019gate} and robustness to noises~\cite{ahmad2019can}.

Pruning methods vary from one-shot approaches~\cite{lee-2019-snip,chen2021only}, where the network is pruned in a single step, to iterative pruning techniques \cite{han-2015-learn-both,evci-2020-gradien-flow,orseau2020logarithmic} that involve a series of pruning and retraining cycles. Among these methods, the Lottery Ticket Hypothesis (LTH) \cite{frankle-2018-lotter-ticket-hypot} has emerged as a particularly successful strategy for effectively eliminating insignificant nodes while preserving the original accuracy of the network. LTH suggests that within every randomly initialized DNN, there exists a subnetwork capable of achieving performance comparable to the original network when trained in isolation \cite{frankle-2018-lotter-ticket-hypot,frankle-2019-linear-mode}.
These identified subnetworks, referred to as "winning tickets," can be systematically uncovered through an iterative process, in which a fixed $p\%$ of the network parameters are pruned, and the network is retrained to replicate the original accuracy.
While this method has shown promise, it typically demands extensive training, resulting in it being less practical for scenarios with limited resources.

Sparsity-Informed Adaptive Pruning (SAP) \cite{diao-2023-prunin-deep} tackles the sub-optimality of the pruning step in LTH by monitoring the network's sparsity level and pruning the weights accordingly. SAP introduces a novel metric called PQ-Index (PQI), a Gini-Index like sparsity metric that measures the compressibility of the network. PQI criterion determines a lower bound on the pruning rate in each iteration, therefore SAP prunes more aggressively in each iteration, achieving the desired remaining weights sooner than LTH \cite{diao-2023-prunin-deep}. However, SAP uses the same stopping criterion as the LTH and trains the network for a fixed number of epochs until it reaches the same test accuracy.

Expanding on the concept of SAP, our focus is to enhance LTH by reducing the training time via refining the stopping criterion. This involves monitoring the network's proximity to the optimal original dense network.
We evaluate this closeness in terms of \textit{gradient and information flow}. The significance of gradient flow (GF) in an effective convergence of DNNs is well-established in the context of sparse networks \cite{evci-2020-gradien-flow,wang2020picking}.
Additionally, in \cite{tessera2021keep}, the authors investigated the optimization of sparse networks from initialization, regularization, and architectural perspectives.
They also demonstrated how Effective Gradient Flow (EGF) can serve as a direct proxy for assessing network performance.
In our study, inspired by this work, we explore the use of GF for optimizing the IMP. Our findings demonstrate that by utilizing the GF as a metric of proximity to optimality, we can significantly improve the efficiency of IMPs.

Information flow (IF) has also been studied in relationship to the generalization ability of neural networks \cite{jin-2020-how-does}, and in the context of pruning deep networks in \cite{ganesh-2021-slimm-neural}.
In these studies, a form of correlation matrix between the consecutive layers of the network is utilized to determine the flow of information between them. 
We utilize the correlation between the activations of consecutive layers to measure IF through the layers in an IMP process and enhance the currently available pruning stopping criterion.

{\bf Contribution:} Our contribution in this work is twofold.
\begin{enumerate*}[label=(\roman*)]
    \item Our study challenges the conventional notion that retraining a network to its original performance is essential to the IMP process.
  We demonstrate that traditional accuracy-based stopping criteria algorithms, suitable for dense networks, do not accurately capture the layer-wise sensitivity of the network and may not accurately reflect proximity of the sparse network parameters to the optimal point.
  \item We introduce two additional metrics (IF and GF) and argue that maintaining these metrics within each iteration is adequate to sustain the performance (test accuracy) of the network.
\end{enumerate*}
\textbf{Information Consistent Pruning (\ourmethod{})} leverages GF and IF and provide a more comprehensive assessment of the resemblance between the sparse network and the fully-trained dense network throughout the layers. We demonstrate that using these metrics, \ourmethod{} efficiently prunes DNNs iteratively, using a much smaller number of epochs.


\section{Method}

\paragraph{Notation for subnetworks and training:}
LTH removes insignificant weights from the network $F_w$ through a process of sequential pruning and retraining. The network at intermediate steps is represented as $F_{w^{\tau}_t}$, with $w^{\tau}_t \in \mathbb{R}^D$ denoting the network weights at pruning iteration $t = 1, 2, \ldots, T$ and training epoch $\tau = 1, 2, \ldots, E$.
This involves sequentially training the network for $T$ iterations to achieve the target sparsity level $s$.
Let $\mathcal{A}(w^\tau_t, t')$ be the training algorithm at iteration $t$ that trains the network $F_{w^\tau_t}$ for $t'$ epochs and outputs $w^{\tau+t'}_t$, and if the network is trained to its optimal point, the training algorithm returns $w_{t}^*$.

In the first step, LTH generates a randomly initialized dense network $F_{w^0_0}$ and train this network for $E$ epochs to its optimum performance $w^*_0=\mathcal{A}(w^0_0, E)$.
Then, in each iteration, the $p\%$ least important weights from the weight vector $w_t \in \Real^d$ are selectively pruned based on their magnitude by multiplying them to a mask matrix $m_t \in \{0, 1\}^D$.
The weight matrix sparsification is carried out using $m_t \odot w_t$ operation, where $\odot$ denotes the Hadamard product. Following the pruning step, $p \cdot d_t = p \|m_t\|_0$ elements are eliminated from the network in each iteration, with $\|.\|_0$ representing the $0$-norm denoting the count of non-zero elements in a vector.

\subsection{SAP: Sparsity-Informed Adaptive Pruning}\label{sec:SAP}
SAP~\cite{diao-2023-prunin-deep} is an LTH-based algorithm that prunes and retrains the network iteratively more optimally by considering the sparsity of the network weight vectors. 
The PQI, $I_{p, q}(w)$, and $r_t$ (lower bound on the number of remaining parameters) is obtained by
\begin{equation}
  \label{eq:pqi}
  \begin{split}
  \mathrm{I}_{p, q}(w)=1-d^{\frac{1}{q}-\frac{1}{p}} \frac{\|w\|_p}{\|w\|_q}, \\ 
  r_t=d_t\left(1+\eta_r\right)^{-q /(q-p)}\left[1-\mathrm{I}\left(w_t\right)\right]^{\frac{q p}{q-p}},
  \end{split}
\end{equation}
where, $0 \le p \leq 1 \le q$,  $\|w\|_p=\left(\sum_{i=1}^d\left|w_i\right|^p\right)^{1 / p}$ is $L_p$-norm, $\eta_r$ is the smallest value such that $\sum_{i \notin M_r}\left|w_i\right|^p \leq \eta_r \sum_{i \in M_r}\left|w_i\right|^p$ and $M_r$ is the set of $r$ indices of $w$ with the largest magnitudes.
The authors of \cite{diao-2023-prunin-deep} showed that by choosing the pruning ratio adaptively and according to eq. \ref{eq:pqi} we can prune more aggressively with respect to the LTH without losing much of the performance. This is because PQI varies according to the compressibility of the network. 
SAP is very similar to LTH, except for when LTH prunes the smallest $p\%$ parameters, SAP finds $r$ and computes the number of parameters to be pruned by $c_t=\big\lfloor d_t \cdot \min \big(\gamma\big(1-\frac{r_t}{d_t}\big), \beta\big)\big\rfloor$,
where $\gamma$ and $\beta$ are two additional hyperparameters, a scaling factor and the maximum pruning ratio, respectively, to further optimize the pruning rate and avoid over-pruning. Once $c_t$ is determined, we prune the extra parameters by setting their corresponding element in masking $m_t$ to zero and setting their gradient to zero.

\subsection{Flow-based Stopping Criterion}
\ourmethod{} is based on SAP, but instead of measuring accuracy, it maintains the IF (\ourmethod{}-IF) and GF (\ourmethod{}-GF) to train more optimally. SAP and LTH randomly initialize a network and train it for $E$ epochs until they reach the maximum accuracy $Acc^*$. 
During the training step, the network undergoes $E$ training epochs in all subsequent iterations to achieve the same maximum accuracy \cite{frankle-2018-lotter-ticket-hypot}.
Our experiments show that this step is redundant and can be further optimized. By leveraging IF and GF, \ourmethod{} reaches the same final remaining weights while maintaining the performance and smaller training epochs. 
To ensure that the network is retrained to its original performance after each iteration of pruning, we harness the power of IF and GF, consistently preserving information throughout the process. 

{\bf Notations:}
Let $F_w$ be a function mapping the input space $\mathcal{X}$ to a set of classes $\mathcal{Y}$ with a set of weights $w$ , i.e. $F_w: \mathcal{X}\mapsto \mathcal{Y}$. $f^{(l)}_w$ is the output of $F_w$ for the $l$-th layer with $M_l$ as the number of filters at layer $l \in \{0, \ldots, L\}$. When feasible, we omit the subscript $w$, and such omission does not cause confusion. $f^{(l)}_i$ is the $i$-th filter in layer $l$ and $\sigma ^{(l)}$ is the activation function in layer $l$.

{\bf Information Flow: }
The IF between two consecutive layers $l$ and $l+1$ of network $F_w$ with $M_l$ and $M_{l+1}$ filters respectively is defined by $\Delta_w(f^{(l)}, f^{(l+1)}) \in \mathbb{R}^{M_l\times M_{l+1}}$. Connectivity, the $(i, j)$ element of IF matrix, is defined by

\begin{equation} \label{eq:def-connectivity}
  \Delta_w \left(f^{(l)}_i,f^{(l+1)}_j\right)= 
  \Ex{(X,Y)\sim
  \mathcal{D}}{f_{i}^{(l)}\; f_{j}^{(l+1)}} \in \mathbb{R}.
\end{equation}

{\bf Gradient Flow:}
The GF in DNN measures optimization dynamics and is typically approximated using the norm of the gradients of the network~\cite{wang2020picking,evci-2020-gradien-flow}. Studying GF is particularly relevant in the context of pruning and sparsifying networks. 
Following~\cite{tessera2021keep}, we compute GF $\mathbf{g} \in \Real^L$ by
\begin{equation}
  \label{eq:gf}
    \mathbf{g}_i =\frac{\partial \ell}{\partial w^{(i)}} \odot m^{(i)} \in \mathbb{R}^{M_l}
\end{equation}
where $\ell$ is the loss function, and $m^{(i)}$ is masking vector applied on layer $i$.


\begin{figure}
  \centering
  \includegraphics[scale=.45]{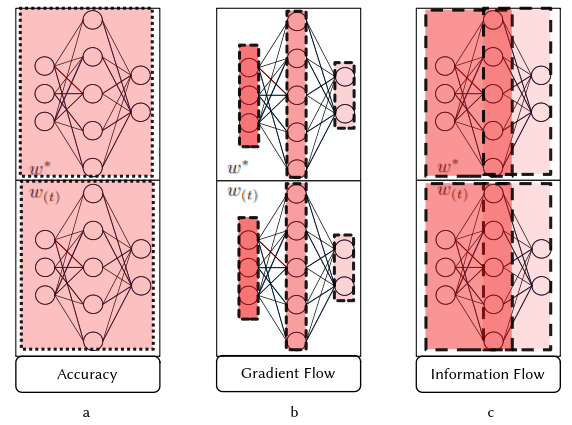}
  \caption{a. Accuracy-based stopping criterion: the overall performances of the optimal and sparse networks are compared b. gradient flow-based: the gradient flow of corresponding layers are compared (similar shades of colors are compared between the two networks) and c. information flow-based: the information flow of corresponding consecutive layers are compared (similarly, boxes with similar shades of colors are compared). }
  \label{fig:schematic}
\end{figure}

\subsection{Intuition and Method Overview}

We establish the importance of maintaining the IF and GF, $\Delta_w$ and $\mathbf{g}$, of the network throughout the process in IMP.
By comparing these metrics of the sparse network to the optimally trained dense network $\Delta_{w^*}$ and $\mathbf{g}^*$, we can effectively ensure that the network's performance is maintained during pruning.
As illustrated in Fig. \ref{fig:schematic}, the conventional accuracy-based stopping criterion algorithm relies solely on the final network layer. This narrow focus results in excessive sensitivity to the performance of this singular layer, leading to a disregard of other crucial aspects of the entire network aspects during the process.
By contrast, our proposed algorithm monitors the GF and IF across all network layers. The gradient based algorithm compares the gradient of each layer to the optimal network, and the information based algorithm looks at the interaction between layers in the current network and compare them with the optimal one.  

 Let $\Phi_w$ represent either the gradient $\mathbf{g}_w$ or the information $\Delta_w$ flow of the network $f(w)$, while $\Phi_w^*$ denotes the flow of the optimal dense network obtained via the training algorithm after training the network for $E$ epochs, $w^* = \mathcal{A}_0(w_0, E)$.
We define $S = \{w : ||\Phi_w - \Phi^*|| \leq \epsilon \}$ as the space where the IF/GF distance between the network and the optimal flow is within $\epsilon$. Additionally, let $M = \{w : || Acc_w - Acc^* || \leq \delta \}$ represent the space where the accuracy of the corresponding networks lies within a $\delta$-vicinity of the optimal point.

\begin{figure}
  \centering
  \includegraphics[width=0.85\textwidth]{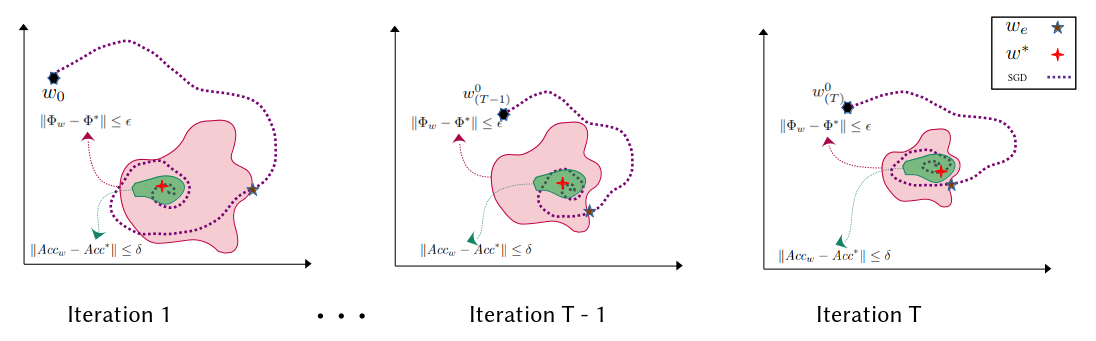}
  \caption[SGD training]{SGD in pruning with accuracy based training vs. flow based training. The red region corresponds to the set $S$, representing the information based stopping criterion, while the green region corresponds to the set $M$, symbolizing the accuracy based stopping criterion. The shift and contraction of these areas occur due to dimensionality reduction resulting from the pruning step.}
  \label{fig:sgd-pruning}
\end{figure}

An IMP algorithm starts with a fully trained network $w^*$, wherein a portion of the network is pruned in each iteration, followed by retraining the network to attain its original performance.
As illustrated in Fig. \ref{fig:sgd-pruning} the training algorithm (i.e. SGD) locates the optimal point by descending along the gradient and stops when the network's performance falls within an acceptable threshold (green region in Fig. \ref{fig:sgd-pruning})
\begin{equation}
  \label{eq:opt_M}
w_{t}^* = \mathcal{A}_M(w_{t}^0, E),
\end{equation}
where, $\mathcal{A}_M$ is the training algorithm that operates on set $M$, indicating its reliance on accuracy-based stopping criterion.
Subsequently, these optimal weights are utilized for pruning, which serves as the initial weights for the subsequent iteration.
While the LTH robustly demonstrates that iteratively following this process eventually yields a network within an acceptable range of performance that is notably smaller than the initial network, it overlooks the dynamic nature of the problem space throughout each pruning iteration.
Consequently, starting precisely from the optimal point of the preceding step (green region in Fig. \ref{fig:sgd-pruning}) becomes less critical.
Through our experiments, we highlight that by solely focusing on maintaining either the GF and IF of the network during the training process - staying within the red region as depicted in Fig. \ref{fig:sgd-pruning} and training for $e$ epochs and returning
    $w_e = \mathcal{A}_S(w_t, e),$
it is possible to ensure the preservation of the network's performance up to the last iteration of our IMP algorithm when $e$ is significantly smaller that $E$ in equation \ref{eq:opt_M}.

In the Experiments section, we empirically demonstrate that utilizing a flow-based stopping criterion does not compromise the overall network performance. Furthermore, we demonstrate that this approach significantly enhances efficiency and reduces training time, highlighting the practical benefits of our proposed methodology.
We provide a comprehensive theoretical analysis of our method in the Supplementary Materials for further insight and understanding.

{\bf Proposed Algorithm} In our proposed algorithm, we first initialize the network, either by random initialization or using the pretrained network. Then, the network is fine-tuned to an optimal point $w^*$, where $\Phi^*$ is computed and stored.
Then, we go through a sequence of pruning and retraining the network up to a certain point.
During the training phase, we have to compute $\Phi$, which can be either $\Delta$ or $\mathbf{g}$, making \ourmethod{}-IF and \ourmethod{}-GF, respectively.
We stop the training loop if the distance between $\Phi$ and $\Phi^*$ is less than $\epsilon$.
Then, the number of pruned neurons is obtained by SAP's approach, and the created mask is applied to the network's weights.

\begin{algorithm}[t!]\label{algo.1}
\caption{Information Consistent Pruning}\label{alg:fogrob}
\SetKwInOut{Input}{In}\SetKwInOut{Output}{Out}
\SetKw{continue}{continue}
\SetKw{break}{break}
\Input{
Given dataset\\
  $E$: Total \# of training epochs to obtain $Acc^*$\\
 $\epsilon\geq 0$  and $T$: total \# of iterations: Hyperparameters \\
 $\Phi$: The proximity metric, $\Delta$ for \ourmethod{}-IF, and $\mathbf{g}$ for \ourmethod{}-GF
}
  \Output{$w_T, m_T$: Weight and mask matrices at the iteration $T$ }
  \BlankLine
  Initialize $w_0$ (retrieve the pretrained model)\\
  Train the network for $E$ epochs to reach the maximum accuracy $Acc^*$\\
  Set $w^*$ the weights of the network with $Acc^*$ accuracy \\
  Compute $\Phi^*$\\
  \For{$t \gets 1$ to $T$}
  {
    \For{$e \gets 1$ to $E$}
    {
        Train the network for 1 epoch, arrive at $w_e$ \\
        Compute $\Phi_e$\\
        \If{$\| \Phi_e - \Phi^* \| \le \epsilon$,
          }
        {
          \break\\
        }
      }
      Compute sparsity level $c_t$ from Sec.\ref{sec:SAP} using PQ-Index~\cite{diao-2023-prunin-deep}\\
      Prune $c_t$ parameters based on $w_t$ and mask $m_t$, then create mask $m_{t+1}$\\ 
      Reset the remaining weights to $w_t$ by $w_{t+1} = m_{t} \odot w_t$
  }
\end{algorithm}

\section{Experimental Results}
{\bf Datasets and DNNs:} We use three primary datasets to evaluate our proposed method: Fashion-MNIST~\cite{xiao2017fashion}, MNIST \cite{deng2012mnist}, and also to showcase the effectiveness of our method on larger datasets we incorporated CIFAR-10~\cite{krizhevsky2010convolutional} into our experimental results.
We use two DNN architectures to evaluate our method in the context of IMP, VGG16 in VGG family \cite{simonyan2014very}, and ResNet18 and ResNet50 to assess our method on a wider range of network depth.
We use SGD with $0.01$ learning rate and $0.9$ weight decay as the optimizer, and the batch size is $64$.
In our implementation, we rely on Pytorch and leverage their established implementations of ResNet and VGG models. Specifically, we utilize models that have been pretrained on the ImageNet dataset. At the beginning of each experiment, we fine-tune the networks for $k=10$ epochs to optimize their performance and achieve maximum test accuracy. The resulting network state is then saved for future reference.
Throughout the pruning process, the networks in SAP, LTH, \ourmethod{} are trained for 20 training epochs, after which a pruning step is applied.

We run experiments for $T=15$ pruning iterations and compare the results of \ourmethod{} with SAP and LTH.
For \ourmethod{} and SAP we considered two sets of $(p, q)$, $(1.0, 2.0)$ and $(0.5, 1.0)$.
These two sets of pairs are suggested by \cite{diao-2023-prunin-deep} for more and less aggressive pruning regiments.
The remaining hyperparameters are selected following the guidelines provided in \cite{diao-2023-prunin-deep}.
We have $p=0.2$ for LTH throughout all experiments.
The scope of pruning we experimented with is \textit{layer-wise}, which involves pruning a specified number of neurons from each layer in the network.
The entire experiment is run for $3$ trials and the means and standard deviations of the experiments are demonstrated on the plots.



\subsection{Performance Comparison}\label{sec:performance_compare}

\begin{figure}
  \centering
    \includegraphics[scale=.20]{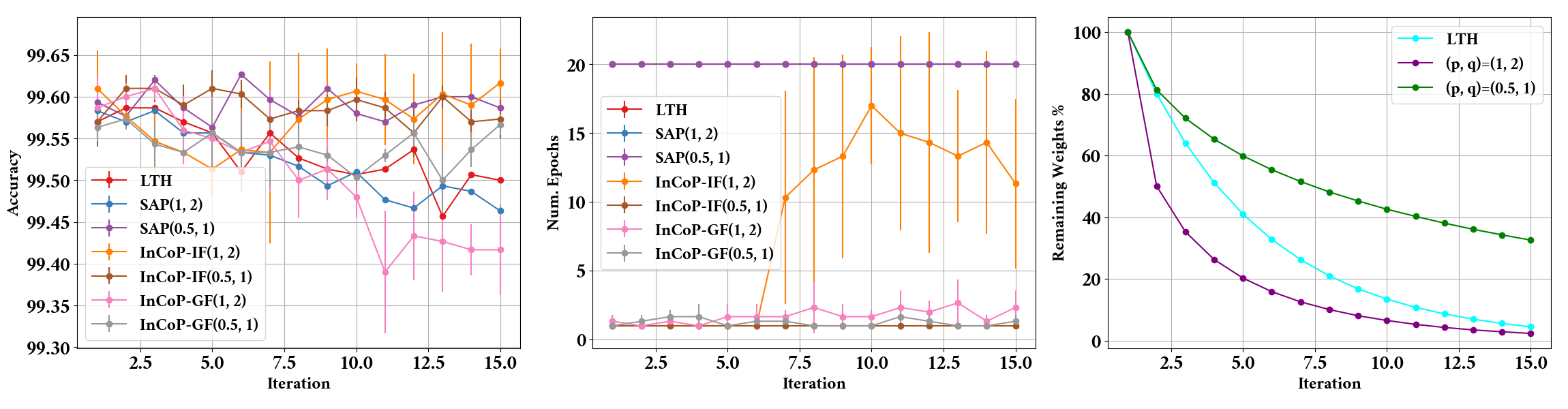}\\
  \includegraphics[scale=.20]{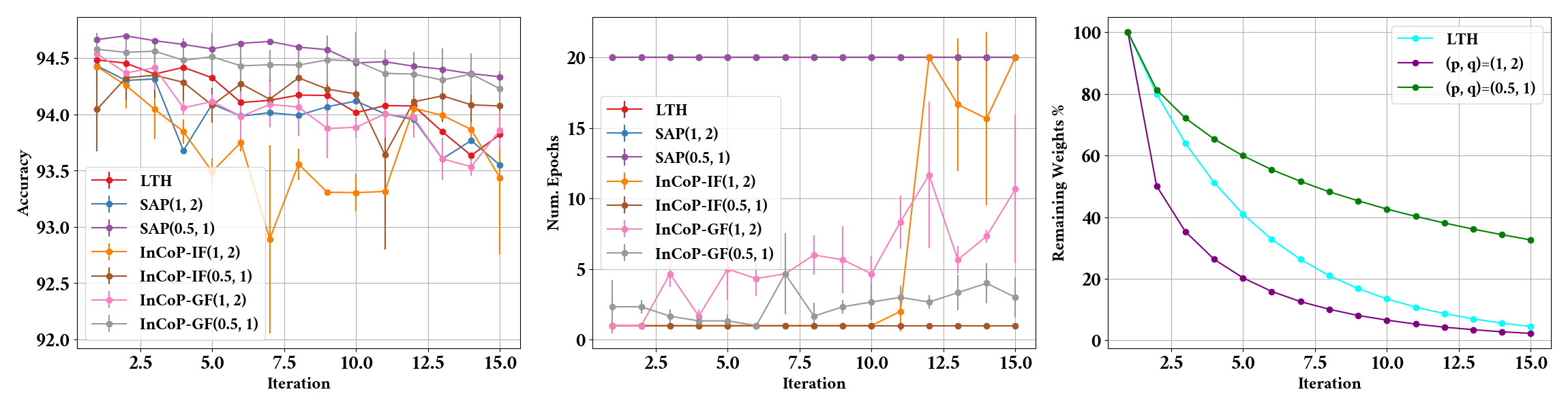}\\
  \includegraphics[scale=.20]{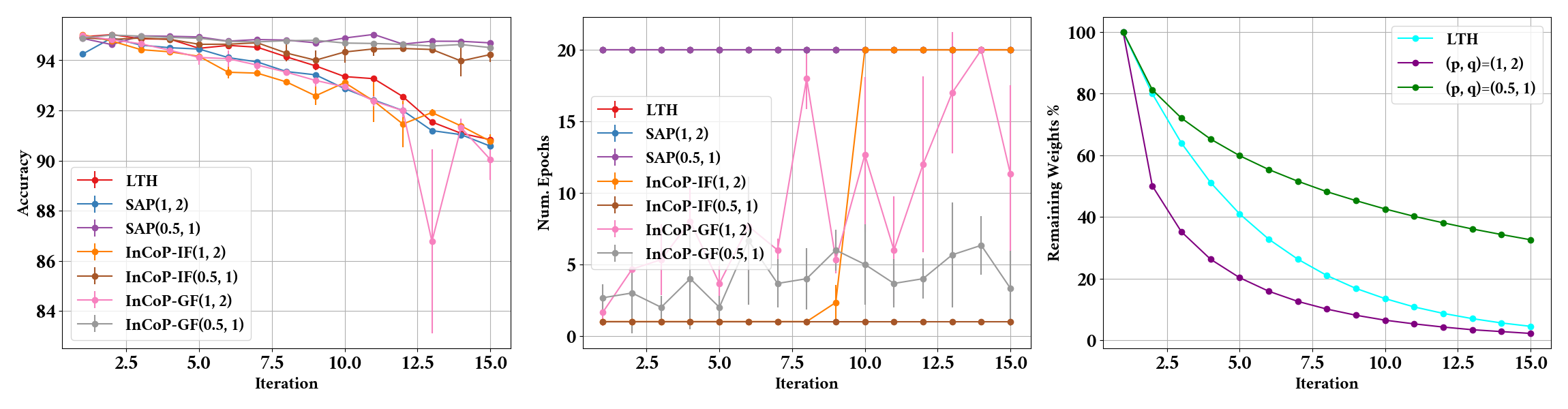}
\caption{ResNet18 - MNIST~(top), FashionMNIST~(middle), CIFAR10~(bottom). The left column shows accuracy at each iteration $t$, the number of training epochs required in $t$ is shown in middle column, and the right column shows remaining weights of $F_w^{(L)}$ in different iterations. Purple line $\rightarrow (p, q) = (1, 2)$, green $\rightarrow (p, q)=(0.5, 1)$, and cyan $\rightarrow$ LTH.
  }
\label{fig:resnet}
\end{figure}
 

\begin{figure*}[th]
  \centering
  \includegraphics[scale=.20]{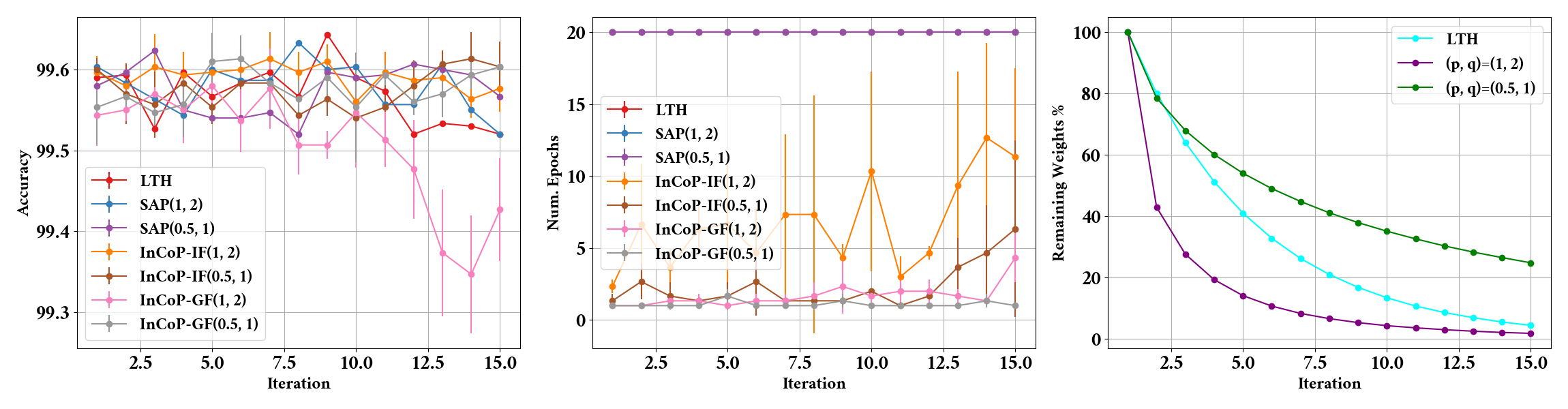}\\
  \includegraphics[scale=.20]{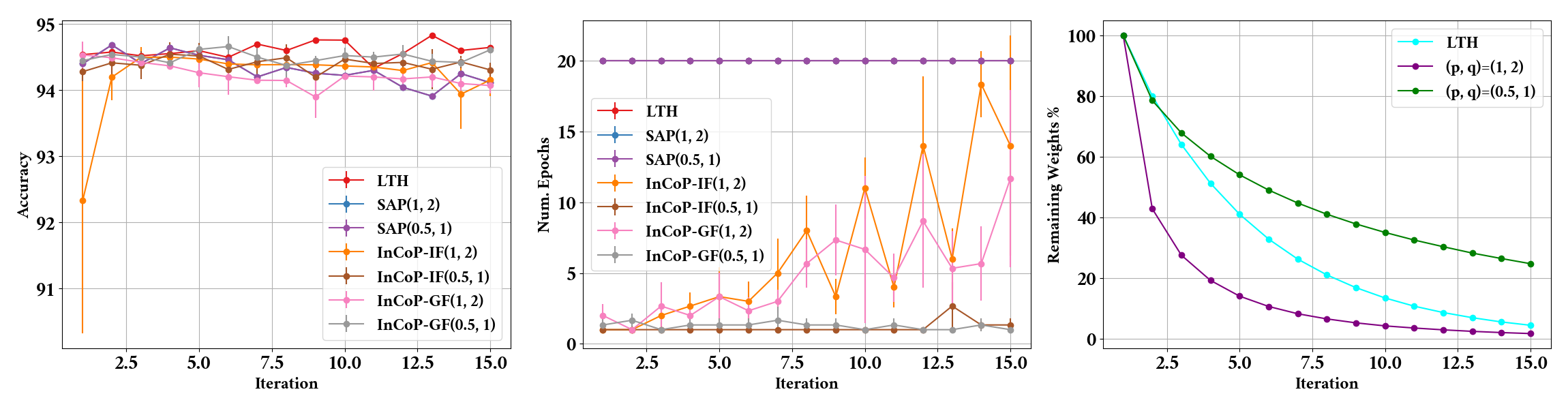}\\
  \includegraphics[scale=.20]{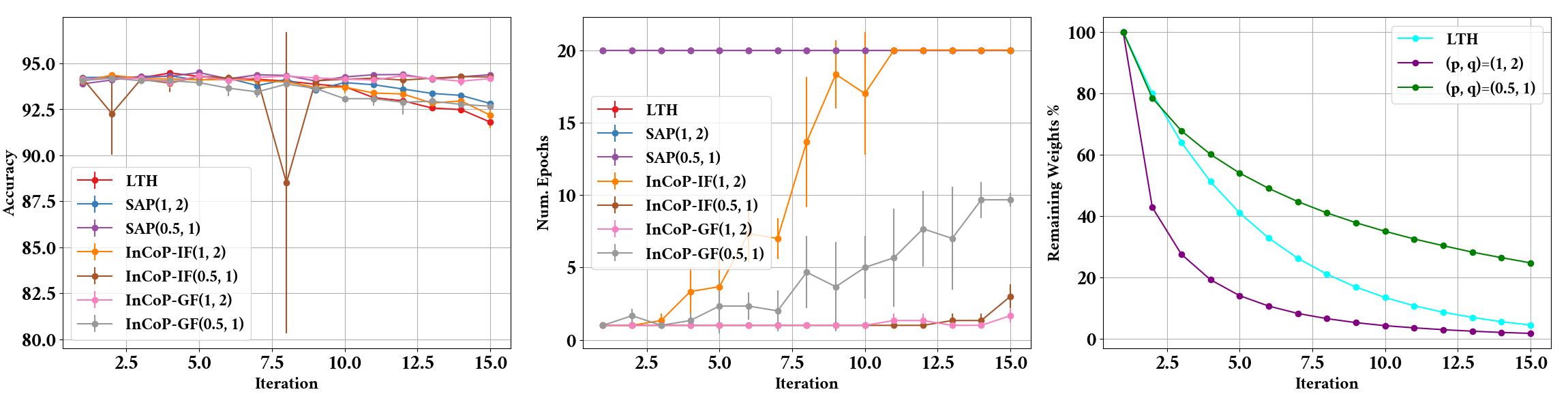}
  \caption{VGG16 - MNIST~(top), FashionMNIST~(middle), CIFAR10~(bottom). As for Fig.~\ref{fig:resnet} purple line indicates $(p, q) = (1, 2)$, green is  $(p, q)=(0.5, 1)$, and LTH is cyan line. }
\label{fig:vgg}
\end{figure*}

As illustrated in Fig.~\ref{fig:resnet} and \ref{fig:vgg}, the accuracy of the network remains relatively stable across different methods, with minor variations due to the inherent stochasticity of the process. However, the data in the middle graph clearly indicates that our proposed approach achieves comparable levels of accuracy with a notably reduced number of training epochs.

It should be noted that the remaining weights in SAP and \ourmethod{} are identical for the same $(p, q)$.
This is due to the fact that \ourmethod{} utilize the same approach as SAP to determine the number of nodes to prune.
With that said, the concept of information-based stopping criterion is independent of the pruning method and can be employed in all other pruning techniques.
Figs.~\ref{fig:resnet} and \ref{fig:vgg} indicate that \ourmethod{}-IF is more prone to aggressive pruning, as it requires more training epochs in later iterations.
This is due to the fact that connectivity is computed for two consecutive layers and as we prune more neurons in each layer, the connectivity drops in $\mathcal{O}(M_l^2)$.
In contrast to \ourmethod{}-IF, this is not the case with \ourmethod{}-GF, as it demonstrates consistent performance regardless of the pruning ratio applied.

A detailed \textbf{ablation study} on hyperparameters, specifically focusing on $\epsilon$ is provided in the supplementary material (SM). A key finding in these results is that increasing the threshold $\epsilon$ in \ourmethod{}, the method tends to exhibit characteristics closer to SAP, behaving more conservatively. Conversely, decreasing $\epsilon$ may result in suboptimal performance, as the algorithm may prematurely stop training before the network achieves sufficient performance gains for the subsequent iteration.

We have also presented experimental results on deeper networks such as ResNet-56 in SM. The results, however, are aligned with the experiments discussed earlier in this paper, demonstrating consistency across different network architectures and datasets.


\subsection{Time Efficiency Analysis}
In Figs.~\ref{fig:resnet} and \ref{fig:vgg} we demonstrate our proposed method exhibits enhanced efficiently in terms of training epochs across diverse datasets and network architectures.
This improvement in the number of training epochs is crucial for optimizing the retraining time of the IMP algorithm, which is the most time-consuming step in the overall process.
This is equivalent to having an adaptive \textit{early stopping} criterion that does not only rely solely on accuracy, but rather works with a more informed metric such as IF and GF.
However, it is important to note that achieving this efficiency involves additional computational costs associated with calculating the GF and IF.


The computation of connectivity as shown in equation \eqref{eq:def-connectivity} has a time complexity of $\mathcal{O}(N)$. Specifically, this connectivity calculation occurs in every training step of the IMP algorithm, resulting in an additional overall overhead of $\mathcal{O}(T \cdot E \cdot N \cdot L \cdot M_l \cdot M_{l+1}$). Fortunately, the value of $E$ for \ourmethod{}-IF is very minimal and can be disregarded, while $T, L, M_l$, and $M_{l+1}$ are constants that can be eliminated from consideration. Similarly, \ourmethod{}-GF introduces computational overhead compared to the original SAP. In each iteration, we need to compute the GF for each layer, that is $\mathcal{O}(N \cdot L \cdot M_l^2)$, which again considering the fact that $M_l, L$ are constant becomes $\mathcal{O}(N)$.

To evaluate the time complexity of \ourmethod{}, we conducted two sets of experiments for a) comparing the execution times of computing GF and IF in isolation and b) comparing the overall computational time of our method with LTH and SAP.
In the first experiment, the computation times were recorded as $122.0\pm 7.9$ seconds for IF calculations and $61.3 \pm 1.3$ seconds for GF calculations in the experiment using ResNet18 on the FashionMNIST dataset.

Figure \ref{fig:time_comparison} presents the second experiment, a time comparison among various methods for one of the scenarios detailed in Performance Comparison.
The experiment involved measuring the overall execution time for running SAP, LTH, \ourmethod{}-IF, and \ourmethod{}-GF on FashionMNIST dataset using ResNet18 for 15 pruning iterations and with identical hyperparameters as described in Performance Comparison. As Table \ref{fig:time_comparison} depicts, incorporating GF and IF not only did not increase the execution time of the pruning algorithm, but also notably reduced it. The efficiency of \ourmethod{} in determining the optimal stopping point for the retraining step is the reason for this reduction.


\begin{figure}[hb!]
  \centering
  \includegraphics[scale=0.35]{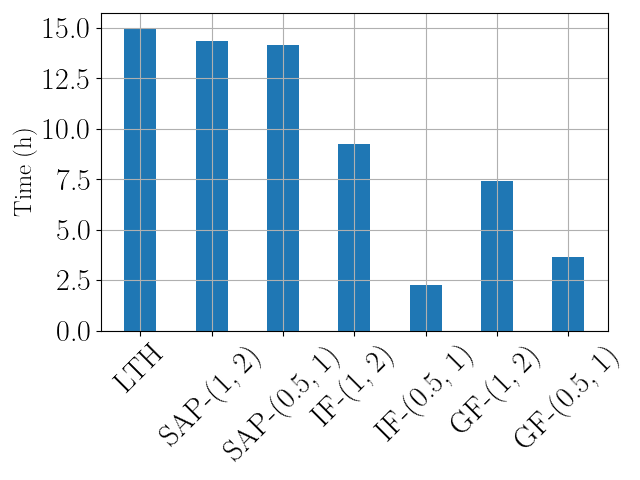}
  \caption[Time Comparison]{Comparison of execution times for a single IMP experiment across different pruning methods.}
  \label{fig:time_comparison}
\end{figure}

\section{Related Work}

{\bf Lottery Ticket Hypothesis} LTH was introduced in \cite{frankle-2018-lotter-ticket-hypot}, and further improved by the concept of \textit{rewinding}, where the network's weights are reset to a network trained for $k$ epochs~\cite{frankle-2019-stabil-lotter} instead of resetting to initial values.
Recent studies \cite{orseau2020logarithmic,malach2020proving} have provided mathematical analysis and explanation for the success of this hypothesis.
They also proved a stronger version of this hypothesis and showed that inside every sufficiently over-parameterized neural network, there exists a subnetwork with roughly the same accuracy.

{\bf Pruning vs. training step} While some studies have focused on acquiring analytical insights into LTH, others have pursued various approaches to optimize it~\cite{frankle-2019-stabil-lotter,you2019drawing,morcos2019one,zullich2021speeding}.
IMPs consist of two crucial steps: \textit{pruning} and \textit{training} the network back to its original accuracy, with different strategies proposed to address these key steps.

In \cite{evci2020rigging}, a variation of LTH is proposed focusing on the pruning step, where instead of pruning a fixed number of parameters, a constant number of floating point computations (FLOPs) is maintained constant throughout the process. Similarly, an adaptive pruning algorithm was proposed in \cite{diao-2023-prunin-deep}, where instead of a fixed pruning ratio, a dynamic one is utilized that changes according to the compressibility of the network.

To optimize the training step in IMPs, \cite{wang2020picking} demonstrated that by leveraging IF one can enhance pruning at the initial stage of the process, leading to time savings during training. The study by \cite{evci-2020-gradien-flow} revealed that pruning neural networks typically degrades GF. They sought to uncover the mechanism by which LTH compensates for this loss in GF, ultimately enhancing network generalization.
They determined that the training step is the pivotal element of the process, driving the enhancement in the performance of LTH. In a similar work \cite{tessera2021keep} delved into optimizing sparse networks from various aspects, including initialization, regularization, and architecture choice. They also introduced EGF as a practical proxy for evaluating network performance.


{\bf Pruning criterion vs. stopping criterion} The pruning step in IMP algorithm comprises two components: the \textit{pruning criterion} and the \textit{stopping criterion}. The pruning criterion determines which connection or nodes in the network have to be eliminated, while the stopping criterion determines the point at which the retraining process stops. The predominant pruning criterion in use is weights magnitudes \cite{zhou2018online}. Magnitude-based pruning algorithms are predicated on the notion that weights with lower magnitudes have minimal impact on the network's output. However, researchers have challenged this perspective and proposed alternative approaches, such as sensitivity-based pruning \cite{lee-2019-snip,hu2016network,lin2018accelerating,haider-2020-compr-onlin}.

In sensitivity-based pruning, the importance of each node is assessed based on sensitivity criteria, \textit{individually}, and then an optimization problem is solved to maximize the performance of the network. This problem is known to be NP-hard and cannot be directly solved, only approximate solutions exist. The high complexity and sub-optimality of these methods compared to magnitude-based techniques pose challenges.

In addition to magnitude and sensitivity, researchers have explored alternative pruning criteria such as the IF. 
For example, in \cite{ganesh2022slimming}, mutual information was utilized to analyze IF within sparse networks and guide the pruning process based on the \textit{connectivity} between network layers. Connectivity-based network training has also been showing promising performance in dealing with catastrophic forgetting in continual learning \cite{andle-2022-theor-under}. 
IF and its relation to the generalization ability of neural networks has been previously investigated in \cite{jin-2020-how-does}. The primary limitation of information-based methods lies in their significant computational overhead, rendering them less appealing for an iterative process.

\section{Discussion and Conclusion}
In this study, we introduce a novel stopping criterion for IMPs that leverages connectivity and GF to improve the training time of the algorithm.
IMP methods have demonstrated outstanding performances by effectively reducing the parameters of the network to a fraction of its original size.
However, the iterative nature of IMP results is slow and resource-intensive operations.
By utilizing a novel stopping criterion, we effectively reduce the training epochs required for the retraining phase of IMP, thereby eliminating redundant training epochs in the intermediate steps.

Our experimental findings clearly illustrate the superior efficiency of our approach when compared to existing methods.
Our algorithm stops the training even if the accuracy of the network has not reached the original accuracy of the dense network during intermediate steps.
Stopping the training at the right time is a crucial decision that cannot be made without the guidance of IF and GF of the network. Our experiments demonstrate that we achieve the same final accuracy as SAP at the end of the process, with only $13\%$ of execution time for the case of \ourmethod{}-IF with $(p, q)=(0.5, 1)$.
This step is crucial for advancing IMP algorithms, especially considering that an advanced method like SAP has already successfully achieved optimal network compression in minimal time.

Despite the computational intensity of GF and IF calculations, our experiments have demonstrated the validity of using these metrics as a stopping criterion. This decision is justified by the substantial speed improvement that our algorithm offers compared to a standard IMP method.

As we move forward, there are several avenues for further investigation in this problem. Some potential directions for future work include:
\begin{enumerate*}[label=(\roman*)]
    \item Leveraging the insights from connectivity and GF to optimize the \textit{pruning} step allows us to strategically prune the network while minimizing the damage to the information and GF within the network.

    \item Exploring alternative metrics such as cosine similarity, rather than the $L_2$ norm, to assess proximity to the optimal network. This approach will aim to identify a more refined method for retraining the network.

    \item Exploring the distribution shift effect, where the network is pruned according to one dataset, and trained using another one.
This idea can be expanded into the field of knowledge transfer and knowledge distillation as well.
\end{enumerate*}

\section*{Acknowledgments}
This work has been partially supported by NSF CAREER-CCF 2451457 and the findings are those of the authors only and do not represent any position of these funding bodies.

\def\BX{\mathbf{X}}
\def\CX{\mathcal{X}}
\def\bx{\mathbf{x}}
\def\bX{\mathbf{X}}
\def\CT{\mathcal{T}}
\def\bbE{\mathbb{E}}

\section{Supplementary Material}
\subsection{Proofs}
In this section, we present our mathematical proofs illustrating why the flow-based stopping criterion is more effective than conventional accuracy-based algorithms. To support this assertion, we establish that when the network's connectivity remains in proximity to the optimal network, the performance differences also remain within the vicinity of the optimal levels.

\textit{{\bf Analytical Hypothesis:} The network's performance remains within a small vicinity of the optimal network if the IF difference between dense network and sparse network i.e. \\
$|\Delta_{w_e}-\Delta_{w^*}|$ is within small $\epsilon\geq 0$.  }

Suppose $\bX$ and $Y$ have distribution $\mathcal{D}$.
Training a DNN $F^{(L)}\in \mathcal{F}$ is performed by minimizing a loss function (empirical risk) that decreases with the correlation between the weighted combination of the networks and the label $Y$ as $\mathbb{E}_{(\mathbf{X},Y)\sim \mathcal{D}}\big\{Y\cdot\big(b+\sum\limits_{F\in\mathcal{F}}w_{F}\cdot F^{(L)}(\BX)\big)\big\}$. 
We remove offset $b$ without loss of generality. Define $\ell(w):=-\sum\limits_{F\in\mathcal{F}}w_{F}\cdot F^{(L)}(\mathbf{X})$,
and let $w^*$ be the set of optimal parameters and $ w^*:= \arg\min_{w} \mathbb{E}_{(\mathbf{X},Y)\sim D}\left\{ Y\cdot\left(\ell({w})\right)\right\}$. We define the weights of $F^{(L)}$ at training epoch $e$ by $w_e$.  
The difference between loss functions at $w^*$ and $w_e$ is given by  
\begin{equation}\label{eq:performances-bound}
  \begin{split}
|Acc^*-Acc_e|:=\big|\Ex{(\bX,Y)\sim \mathcal{D}}{Y\cdot\ell(w^*)}- \\
  \Ex{(\bX,Y)\sim \mathcal{D}}{Y\cdot\ell(w_e)}\big|.
  \end{split}
\end{equation}
Inspired by Lemma~3 (inverse Jensen inequality) from \cite{wunder2021reverse}, for constants $a_1$ and $a_2$,
 set $a=a_2/|a_1|>0$, therefore we bound (\ref{eq:performances-bound}) as following 
\begin{equation}\label{eq:performances-bound1}
|Acc^*-Acc_e| \geq a\;\Ex{(\bX,Y)\sim \mathcal{D}}{Y\cdot|\ell(w_e)-\ell(w^*)|}
\end{equation}
Using the second-order Taylor approximation of $\loss$ around $w^*$ (optimal point), we have 
  \begin{equation}
    \label{eq:taylor-expantion}
  \loss(w_e) - \loss(w^*) \approx \frac{1}{2} (w_e - w^*)^T \nabla^2
  \loss(w^*)(w_e - w^*), 
  \end{equation}
because $\nabla \ell(w^*)=0$.

Let $\lambda_{\min}$ be the maximum eigenvalue of $\nabla^2 \loss(w^*)$. Combining (\ref{eq:performances-bound1}) and (\ref{eq:taylor-expantion}), we have 
\begin{align}\label{eq:1}
|Acc_e-Acc^*| \geq \frac{a}{2}\; \Ex{(\bX, Y) \sim \mathcal{D}}{ Y.\lambda_{\min} \|
    w_e - w^* \|^2 },
\end{align}
and $a>0$.
Let $\Delta_e=[\Delta_{w_e}(1), \ldots, \Delta_{w_e}(l),\ldots, \Delta_{w_e}(L)]$ be the connectivity vector of all layers at epoch $e$ with weights $w_e$. In addition, note that  $f^{(l+1)}=\sigma^{(l)}(ww^{(l)} f^{(l)}+b^{(l)})$,
for simplicity ignore the offset $b^{(l)}$. Suppose $\sigma(.)$ is a Lipschitz continuous function. Set
$\bar{\sigma}_{w^*} (t)=t \; \sigma(w^* t)$, 
\begin{equation}\label{eq:element-wise-conn-diff}
f^{(l)}_{w_e}\; f_{w_e}^{(l+1)}-f^{(l)}_{w^*}
\; f_{w^*}^{(l+1)}= \bar{\sigma}_{w_e} (f^{(l)}_{w_e})-\bar{\sigma}_{w^*}(f^{(l)}_{w^*}).
\end{equation}
Next, recall Lemmas~1 and 2 from SM, we find the upper bound as follows
  $$|\bar{\sigma}_{w_e}(t)-\bar{\sigma}_{w^*}(s)|\leq C^{(1)}_\sigma\Vert{w_e}-w^*\Vert^2+C^{(2)}_\sigma \Vert t-s\Vert^2,$$ 
where $C^{(1)}_\sigma,C^{(2)}_\sigma$ are constants. Combined with (\ref{eq:element-wise-conn-diff}) implies that  
\begin{equation}
    \vert\Delta_{w_e}(l)-\Delta_{w^*}(l)\vert_{ij}\leq\mathbb{E}_{(\BX,Y)\sim \mathcal{D}}\left[ \big\vert C^{(1)}_\sigma \Vert{w_e}-w^*\Vert^2+
    C^{(2)}_\sigma\Vert f^{(l)}_{w_e}-f^{(l)}_{w^*}\big\Vert^2  \vert \right]_{ij}, 
  \end{equation}
Since the difference between filters in layer $l$ is bounded:
  \begin{equation}\label{eq-01}
     \sum_{l=1}^L \vert\Delta_{w_e}(l)-\Delta_{w^*}(l)\vert\leq \mathbb{E}_{(\BX,Y)\sim \mathcal{D}}[ \overline{C}_\sigma \Vert{w_e}-w^*\Vert^2+L C_\sigma], 
  \end{equation}
where $\overline{C}_\sigma$ and $C_\sigma$ are constants and $L$ is total number of layers in the network. 
Assume $  \sum_{l=1}^L \vert\Delta_{w_e}(l)-\Delta_{w^*}(l)\vert\geq \epsilon$ for given $\epsilon\geq L C_\sigma$, then 

  \begin{equation}\label{eq-02}
\mathbb{E}_{(\BX,Y)\sim D}[ \overline{C}_\sigma \Vert{w_e}-w^*\Vert^2]\geq \epsilon-L C_\sigma. 
  \end{equation}
We claim that $\exists\; \widetilde{C}$ such that $\widetilde{C}\geq \overline{C}_\sigma$ and
\begin{equation}
  \mathbb{E}_{(\BX,Y)\sim D}[\widetilde{C}\Vert{w_e} - w^*\Vert^2]\leq  
  \frac{a}{2}  \; \Ex{(\bX, Y) \sim \mathcal{D}}{ Y.\lambda_{\min} \| 
    w_e - w^* \|^2 },   
\end{equation}
Combining (\ref{eq:1}) and (\ref{eq-02}), we conclude that $|Acc_e-Acc^*|\geq \epsilon-L C_\sigma$. This shows that as the $\Delta_{w_e}$ gets closer to $\Delta_{w^*}$ another word $\epsilon\rightarrow 0$, then the accuracy difference between training at epoch $e$ ($Acc_e$) and best performance ($Acc^*$) gets closer to zero. We infer from this analysis that connectivity $\Delta$ at epoch $e$ provides clear information on stopping the training and propose Algorithm~\ref{algo.1}: 

\begin{lemma} \label{lem:lipschits-1}
Assume activation function $\sigma_{w}(.)$ is Lipschitz continuous function and bounded. Then function  
$\bar{\sigma}_{w} (t)=t \; \sigma(w\; t)$ is also Lipschitz continuous.
\end{lemma}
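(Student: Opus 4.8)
The plan is to prove the claimed Lipschitz bound by the standard ``add and subtract'' decomposition, exploiting that $\sigma$ is \emph{simultaneously} Lipschitz and bounded, and that the argument $t$ here is a layer activation — hence itself the output of a bounded activation applied to a bounded input — so it ranges over a bounded set $\{|t|\le B\}$. This last point is what makes the otherwise unbounded linear prefactor $t$ harmless. Throughout I would fix a Lipschitz constant $L_\sigma$ of $\sigma$, write $\|\sigma\|_\infty$ for its sup norm, and treat the weight $w$ as fixed and finite so that $|w|$ is a constant.

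First I would write, for any admissible $t,s$,
\begin{equation}
\bar{\sigma}_{w}(t)-\bar{\sigma}_{w}(s) = t\,\sigma(w t) - s\,\sigma(w s) = t\bigl(\sigma(w t)-\sigma(w s)\bigr) + (t-s)\,\sigma(w s),
\end{equation}
and then bound the two pieces separately. The first is controlled by the Lipschitz property of $\sigma$ together with $|t|\le B$, giving $|t|\,L_\sigma|w|\,|t-s|\le B L_\sigma|w|\,|t-s|$; the second is controlled by boundedness of $\sigma$, giving $\|\sigma\|_\infty\,|t-s|$. Adding them yields
\begin{equation}
\bigl|\bar{\sigma}_{w}(t)-\bar{\sigma}_{w}(s)\bigr| \le \bigl(B L_\sigma|w| + \|\sigma\|_\infty\bigr)\,|t-s|,
\end{equation}
which is exactly Lipschitz continuity of $\bar{\sigma}_{w}$ with constant $C_{\bar{\sigma}}:=B L_\sigma|w|+\|\sigma\|_\infty$. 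For the vector/matrix version actually invoked in \eqref{eq:element-wise-conn-diff}, the same three lines go through verbatim with $|\cdot|$ replaced by the Euclidean norm and $|w|$ by the operator norm $\|w\|$, since $\sigma$ acts coordinatewise.

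The one genuine subtlety — and the step I would flag explicitly at the start of the proof — is the restriction to a bounded domain: on all of $\mathbb{R}$ the map $t\mapsto t\,\sigma(w t)$ need not be globally Lipschitz, since its derivative $\sigma(w t)+w t\,\sigma'(w t)$ contains the unbounded term $w t\,\sigma'(w t)$. So the lemma must be read (and is used in the paper) with $t$ confined to the range of the layer features, and I would justify $B<\infty$ by the assumed boundedness of $\sigma$ — which forces all hidden activations to lie in a fixed bounded set — together with boundedness of the input distribution $\mathcal{D}$. Everything beyond making this hypothesis precise is the routine triangle-inequality estimate above, so I do not expect any further obstacle.
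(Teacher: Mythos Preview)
Your proof is correct and follows essentially the same route as the paper's: the identical add-and-subtract decomposition $t\sigma(wt)-s\sigma(ws)=t(\sigma(wt)-\sigma(ws))+(t-s)\sigma(ws)$, followed by Lipschitz continuity of $\sigma$ on the first term, boundedness of $\sigma$ on the second, and a bound $|t|\le B$ on the prefactor. The paper invokes the same boundedness of $t$ (written there as $\|t\|^2\le C_t$) without stating it in the lemma; your explicit flag that this hypothesis is needed --- since $t\mapsto t\,\sigma(wt)$ is not globally Lipschitz on all of $\mathbb{R}$ --- is a welcome clarification rather than a deviation.
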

\begin{proof}
Recall Lipschitz continuous property for activation function $\sigma_w$, then there exists a constant $C_\sigma$ such that 
\begin{equation}\label{LP}
|\sigma (t)-\sigma(s)|\leq C^{(1)}_\sigma \|t-s\|^2.
\end{equation}
Given the function $\bar{\sigma}_{w} (t)=t \; \sigma(w\; t)$, we have 
  \begin{align}\label{eq:2}
 | \bar{\sigma}_w (t) - \bar{\sigma}_w(s) | & = | t \sigma(wt) - s \sigma(ws) | \\
&= | t \sigma(wt) - t \sigma(ws) + 
 t \sigma(ws) - s \sigma(ws) |
 \end{align}
 Applying triangle inequality we bound (\ref{eq:2}) by 
 \begin{align}
  | \bar{\sigma}_w (t) - \bar{\sigma}_w(s) | &  \leq \|t\|^2 | \sigma(wt) - \sigma(ws) +
 |\sigma(ws)| \|t - s\|^2
 \end{align}
 Now because $\sigma$ is Lipschitz continuous (\ref{LP}), $\sigma$ is bounded by $C^{(2)}_\sigma$, and $\|t\|^2\leq C_t$,  we have 
 \begin{align}
  | \bar{\sigma}_w (t) - \bar{\sigma}_w(s) | & \leq C_t C^{(1)}_{\sigma} \| wt - ws \|^2 +
 |\sigma(ws)| \| t - s\|^2 \\
 & \leq (C_t C^{(1)}_{\sigma} \|w\|^2 + | C^{(2)}_\sigma | ) \|t - s\|^2.
 \end{align}
 This concludes that for constant $\overline{C}_\sigma:=(C_t C^{(1)}_{\sigma} \|w\|^2 + | C^{(2)}_\sigma | )$ and $\|w\|^2\leq C_w$, we can write 
 \begin{equation}
| \bar{\sigma}_w (t) - \bar{\sigma}_w(s) |\leq \overline{C}_\sigma \|t - s\|^2, 
 \end{equation}
 which proves $\bar{\sigma}$ is Lipschitz continuous.
\end{proof}

\begin{lemma} \label{lem:lipschits-2}
If $\sigma_{w}(.)$ is Lipschitz continuous, we can upper bound
  $|\bar{\sigma}_{w_e}(t)-\bar{\sigma}_{w^*}(s)|$ by 
\begin{align}\label{main-bound}
 | \bar{\sigma}_{w_e}(t) - \bar{\sigma}_{w^*}(s) | \leq C^{(1)} \| t - s\|^2 + C^{(2)} \|w_e-w^*\|^2, 
\end{align}
where $C^{(1)}$ and $C^{(2)}$ are constants. 
\begin{proof}
Given $\bar{\sigma}_w(t)=t \;\sigma(w\;t)$, we have 
\begin{align} \label{eq:ub-lipschitz}
 | \bar{\sigma}_{w_e}(t) - \bar{\sigma}_{w^*}(s) | & = | t \sigma(w_e\;t) - s
        \sigma(w^*\;s) | \nonumber\\
    & = | t \sigma(w_e\; t) - s \sigma (w_e\; t) + \nonumber\\
    & \;\;\;\;\;  s \sigma (w_e\; t) - s \sigma(w^*\; s) |
\end{align}
Apply triangle inequality to (\ref{eq:ub-lipschitz}), we have 
\begin{align}\label{bound.1}
  | \bar{\sigma}_{w_e}(t) - \bar{\sigma}_{w^*}(s) | \leq |\sigma(w_e\; t)| \| t - s\|^2 + \nonumber\\
  \|s\|^2 |\sigma(w_e\; t) - \sigma(w^*\; s) |.  
\end{align}
Because $\|s\|^2\leq C_s$ and $\sigma$ is bounded by $C^{(1)}_\sigma$ for constants $C_s$ and $C^{(1)}_\sigma$, we bound (\ref{bound.1}) as follows
\begin{align}
 | \bar{\sigma}_{w_e}(t) - \bar{\sigma}_{w^*}(s) | \leq C_\sigma \| t - s\|^2 + C_s |\sigma(w_e\; t) - \sigma(w^*\; s) |.
\end{align}
In addition by using Lipschitz continuous property of $\sigma$, for constant $C^{(2)}_\sigma$ we can write 
\begin{align}\label{bound2}
  | \bar{\sigma}_{w_e}(t) - \bar{\sigma}_{w^*}(s) | \leq C^{(1)}_\sigma \| t - s\|^2 + \nonumber \\
  C_s C^{(2)}_\sigma \|w_e\; t - w^*\; s\|^2.
\end{align}
Also, we have 
\begin{align}\label{bound3}
\|w_e\; t - w^*\; s\|^2 &= \|w_e t -w_e s +w_e s -w^* s\|^2\\
& \leq \|w_e\|^2\|t-s\|^2+\|s\|^2\|w_e-w^*\|^2\\
&\leq C_e \|t-s\|^2 + C_s \|w_e-w^*\|^2.
\end{align}
The combination of  (\ref{bound2}) and (\ref{bound3}) implies 
\begin{equation}\label{bound4}
  | \bar{\sigma}_{w_e}(t) - \bar{\sigma}_{w^*}(s) | \leq C^{(1)}_\sigma \| t - s\|^2 + 
  C_s C^{(2)}_\sigma \left( C_e \|t-s\|^2 + C_s \|w_e-w^*\|^2\right). 
\end{equation}
We simplify (\ref{bound4}) as 
\begin{align}\label{bound5}
 | \bar{\sigma}_{w_e}(t) - \bar{\sigma}_{w^*}(s) | \leq C^{(1)} \| t - s\|^2 + C^{(2)} \|w_e-w^*\|^2, 
\end{align}
where $C^{(1)}=C^{(1)}_\sigma+C_s C^{(2)}_\sigma C_e$ and $C^{(2)}=C^2_s C^{(2)}_\sigma $. This concludes our claim in (\ref{main-bound}). 
  \end{proof}
\end{lemma}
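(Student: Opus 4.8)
The plan is to prove Lemma~\ref{lem:lipschits-2} by the same add-and-subtract technique used in Lemma~\ref{lem:lipschits-1}, but now tracking two sources of perturbation simultaneously: the change in the argument ($t$ versus $s$) and the change in the weight ($w_e$ versus $w^*$). First I would write $\bar{\sigma}_{w_e}(t)-\bar{\sigma}_{w^*}(s) = t\,\sigma(w_e t) - s\,\sigma(w^* s)$ and insert the intermediate term $s\,\sigma(w_e t)$, splitting the difference into $\big(t\,\sigma(w_e t) - s\,\sigma(w_e t)\big) + \big(s\,\sigma(w_e t) - s\,\sigma(w^* s)\big)$. Applying the triangle inequality, the first piece is $|\sigma(w_e t)|\,|t-s|$ and the second is $|s|\,|\sigma(w_e t)-\sigma(w^* s)|$. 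Using boundedness of $\sigma$ (by $C^{(1)}_\sigma$) and boundedness of $|s|$ (by $C_s$), this already gives the first term $C_\sigma\|t-s\|^2$ plus a residual $C_s\,|\sigma(w_e t)-\sigma(w^* s)|$.

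Next I would control the residual $|\sigma(w_e t)-\sigma(w^* s)|$ by the Lipschitz property of $\sigma$, bounding it by $C^{(2)}_\sigma\|w_e t - w^* s\|^2$. The key subroutine is then to expand $\|w_e t - w^* s\|^2$ itself via another add-and-subtract, $w_e t - w^* s = w_e(t-s) + (w_e - w^*)s$, and bound it by $\|w_e\|^2\|t-s\|^2 + \|s\|^2\|w_e-w^*\|^2 \le C_e\|t-s\|^2 + C_s\|w_e-w^*\|^2$, using boundedness of the weight norms. Substituting back and collecting all the $\|t-s\|^2$ terms into one constant $C^{(1)} := C^{(1)}_\sigma + C_s C^{(2)}_\sigma C_e$ and leaving the $\|w_e-w^*\|^2$ term with constant $C^{(2)} := C_s^2 C^{(2)}_\sigma$ yields exactly the claimed bound~\eqref{main-bound}.

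There is no deep obstacle here — the argument is a routine chain of triangle inequalities and applications of the hypotheses (Lipschitz continuity and boundedness of $\sigma$, plus the standing assumption that the relevant norms $\|t\|,\|s\|,\|w_e\|$ are bounded). The one point that requires a little care, and which I would flag as the main subtlety, is the slightly non-standard use of a \emph{squared} norm in the Lipschitz estimates, e.g. writing $|\sigma(t)-\sigma(s)|\le C^{(1)}_\sigma\|t-s\|^2$; this is only consistent on a bounded domain (where $\|t-s\|$ and $\|t-s\|^2$ are comparable up to a constant), so I would note that all quantities live in a compact region — which is reasonable since activations and weights are bounded throughout training — and absorb the resulting constants. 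Once that convention is fixed, the cross term $\|w_e t - w^* s\|^2$ splits cleanly and the constants $C_e, C_s$ can be taken uniform over the training trajectory. The final step is purely bookkeeping: rename the accumulated constants as $C^{(1)}$ and $C^{(2)}$ to match the statement.
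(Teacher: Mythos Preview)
Your proposal is correct and follows exactly the same approach as the paper's own proof: the same add-and-subtract of $s\,\sigma(w_e t)$, the same use of boundedness and Lipschitz continuity of $\sigma$, the same second add-and-subtract $w_e t - w^* s = w_e(t-s) + (w_e - w^*)s$, and even the identical final constants $C^{(1)} = C^{(1)}_\sigma + C_s C^{(2)}_\sigma C_e$ and $C^{(2)} = C_s^2 C^{(2)}_\sigma$. Your remark about the squared-norm Lipschitz convention is a legitimate caveat that the paper does not explicitly address.
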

{\bf Remark:} {\it (Lemma~2 proof using Lemma~1)} 
Based on Lipschitz property of $\sigma$, and Lemma~1, there exists a constant $\overline{C}_\sigma$ such that
\begin{equation}\label{eq.5}
\begin{aligned}
|\bar{\sigma}_{w_e}(t)-\bar{\sigma}_{w^*}(s)|&= |\bar{\sigma}_{w_e}(t)-\bar{\sigma}_{w_e}(s)+ \bar{\sigma}_{w_e}(s)-\bar{\sigma}_{w^*}(s)|\\
&\leq \overline{C}_\sigma \|t-s\|^2 +|\bar{\sigma}_{w_e}(s)-\bar{\sigma}_{w^*}(s)|.
\end{aligned}\end{equation}
Going back to the definition of $\bar\sigma$: 
\begin{align}
|\bar{\sigma}_{w_e}(t)-\bar{\sigma}_{w^*}(s)| &\leq \overline{C}_\sigma \|t-s\|^2 + |s\;\sigma(w_e\; s)-s\sigma(w^* s)|\\ 
& \leq \overline{C}_\sigma \|t-s\|^2 + C^2_s C_\sigma \|w_e-w^*\|^2, 
\end{align}
which again implies (\ref{main-bound}) with $C^{(1)}=\overline{C}_\sigma$ and $C^{(2)}=C^2_s C_\sigma$. 


\begin{lemma}\label{lemma.1}
(Lemma 3 - Reverse Jensen Inequality~\cite{wunder2021reverse}) Let $f:\mathbb{R}_0^+\mapsto \mathbb{R}_0^+$ be convex, increasing, and $f(0)=0$. Then for any random variable $Z\geq 0$, 
\begin{align}
f(a_1 \mathbb{E}(Z))\geq a_2 \mathbb{E}(f(Z)), \;\;\hbox{where $a_1$ is a function of $a_2>0$.}
\end{align}
\end{lemma}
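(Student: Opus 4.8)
The plan is to derive the inequality from two elementary monotonicity consequences of convexity, together with a bound on the range of $Z$ (the feature that makes the statement non-vacuous). Throughout I use that a convex $f$ with $f(0)=0$ is \emph{star-shaped}: for $0<s\le t$, writing $s=\tfrac{s}{t}\,t+\bigl(1-\tfrac{s}{t}\bigr)\cdot 0$ and applying convexity gives $f(s)\le\tfrac{s}{t}f(t)$, so $t\mapsto f(t)/t$ is non-decreasing on $(0,\infty)$; equivalently $f(\lambda t)\le\lambda f(t)$ for $\lambda\in[0,1]$ and $f(\lambda t)\ge\lambda f(t)$ for $\lambda\ge 1$. (The additive cousin, super-additivity $f(s+t)\ge f(s)+f(t)$, follows the same way and would power a sum-form reverse Jensen, but the multiplicative form is what matches the statement.)

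First I would suppose $Z\le B$ almost surely for some $B>0$ --- the case of interest, since in the application $Z=Y\cdot|\ell(w_e)-\ell(w^*)|$ is a bounded combination of network activations. Convexity of $f$ on $[0,B]$ gives the pointwise chord bound $f(Z)\le\tfrac{Z}{B}\,f(B)$, and taking expectations yields $\mathbb{E}[f(Z)]\le\tfrac{f(B)}{B}\,\mathbb{E}[Z]$. It then remains to choose $a_1$ so that $f\bigl(a_1\mathbb{E}[Z]\bigr)\ge a_2\,\tfrac{f(B)}{B}\,\mathbb{E}[Z]$. Picking $a_1$ large enough that $a_1\mathbb{E}[Z]\ge B$ --- possible with $a_1$ depending only on $a_2$ and the problem constants once $\mathbb{E}[Z]$ is bounded below --- monotonicity of $f$ gives $f(a_1\mathbb{E}[Z])\ge f(B)$, while $\mathbb{E}[Z]\le B$ gives $\tfrac{f(B)}{B}\mathbb{E}[Z]\le f(B)$; hence $f(a_1\mathbb{E}[Z])\ge f(B)\ge a_2\,\mathbb{E}[f(Z)]$ for every $a_2\in(0,1]$, and for $a_2>1$ one enlarges $a_1$ by the factor $a_2$ and invokes star-shapedness. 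This establishes the lemma with $a_1$ a function of $a_2$ and the fixed range of $Z$. (If one only asks $a_1$ to depend on the \emph{law} of $Z$, the argument collapses: a non-constant convex increasing $f$ with $f(0)=0$ is unbounded, so whenever $\mathbb{E}[f(Z)]<\infty$ and $0<\mathbb{E}[Z]<\infty$, continuity and monotonicity of $f$ alone produce such an $a_1$.)

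The main obstacle is exactly the spot where the range bound $B$ enters: for a fixed $a_1$ the inequality is \emph{false} without some control on the upper tail of $Z$ --- e.g. $f(t)=t^2$ with $Z$ placing vanishing mass on a far-away atom makes $\mathbb{E}[f(Z)]/f(\mathbb{E}[Z])$ unbounded --- so the real content is pinning down the dependence of $a_1$ on the support (or the sub-exponential constants) of $Z$. The one genuinely technical step, should one wish to relax almost-sure boundedness to a tail condition, is replacing the chord bound by a truncation at level $B$ plus a tail remainder $\mathbb{E}[f(Z)\mathbf{1}_{Z>B}]$ dominated by the assumed decay; absorbing that remainder into the constants, and then verifying that the $Z$ arising in \eqref{eq:performances-bound1} genuinely satisfies the required boundedness, is where the care is needed.
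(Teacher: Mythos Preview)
The paper does not actually prove this lemma: it is stated with a citation to \cite{wunder2021reverse} and no argument is given, so there is no paper-side proof to compare your proposal against.

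On its own merits, your argument is sound and correctly isolates the two structural ingredients: star-shapedness of a convex $f$ with $f(0)=0$ (so $t\mapsto f(t)/t$ is non-decreasing), and the chord bound $\mathbb{E}[f(Z)]\le \tfrac{f(B)}{B}\,\mathbb{E}[Z]$ under an almost-sure bound $Z\le B$. The chain $f(a_1\mathbb{E}[Z])\ge f(B)\ge \mathbb{E}[f(Z)]$ for $a_1\ge B/\mathbb{E}[Z]$, together with the star-shaped amplification $f(a_2\cdot a_1\mathbb{E}[Z])\ge a_2 f(a_1\mathbb{E}[Z])$ for $a_2\ge 1$, is correct. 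You are also right to flag the ambiguity in ``$a_1$ is a function of $a_2$'': read literally (i.e.\ $a_1$ depends only on $a_2$ and not on the law of $Z$), the inequality is false without a tail or support bound --- your $f(t)=t^2$ counterexample makes this precise --- while if $a_1$ may depend on the law of $Z$ the statement is nearly tautological. The paper's own statement is informal on exactly this point, and since the random variable it applies the lemma to (a label times a loss difference) is bounded in the intended setting, your boundedness hypothesis is the appropriate regime. Your proposal is adequate for what the paper uses.
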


\subsection{Experimental Results}

Our experimentation also extends to deeper networks such as ResNet50 as well, in which due to the space limitations we could not include in the experiment section.
In this experiment, we trained ResNet18 with identical parameters as detailed in the Experimental Results section. The results displayed in Figure \ref{fig:cifar100} indicate that \ourmethod{}-IF outperformed \ourmethod{}-GF, showcasing superior performance while necessitating a reduced number of training epochs. This outcome is consistent with the results observed in the body of the paper, showing that our method works across various sizes of network and dataset.

\begin{figure}
  \centering
  \includegraphics[scale=.20]{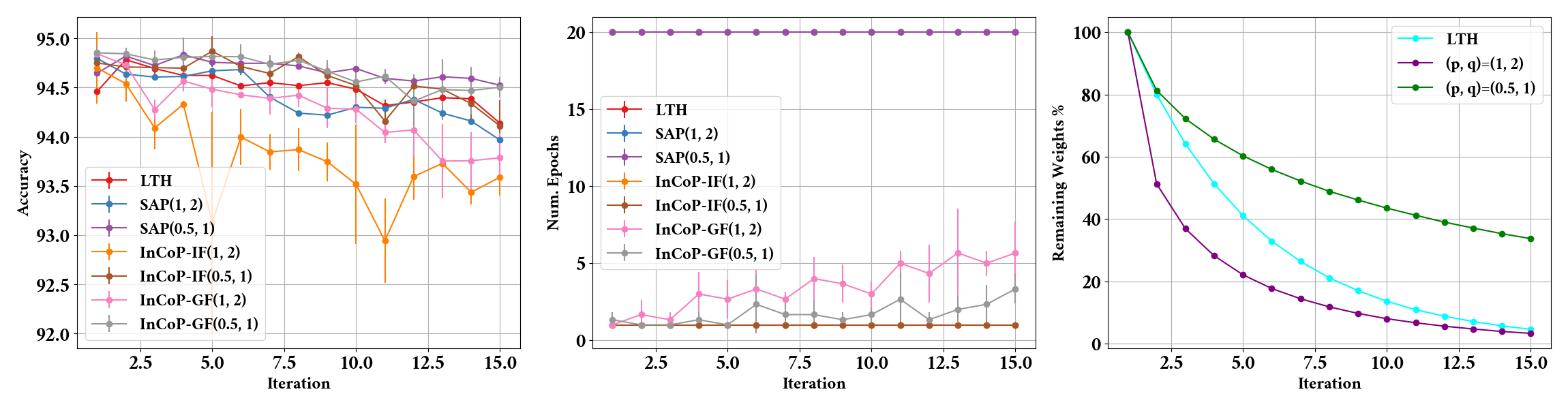}
\caption{ResNet50 with FashionMNIST. As for Fig.~\ref{fig:resnet} purple line $\rightarrow (p, q) = (1, 2)$, green $\rightarrow (p, q)=(0.5, 1)$, and cyan $\rightarrow$ LTH.}
\label{fig:cifar100}
\end{figure}

\paragraph{Hyperparameters Study}
In our ablation study, we experimented with various amounts of $\epsilon$ for our two proposed methods, CIAP and GIAP.

\begin{figure}
  \centering
  \includegraphics[scale=.20]{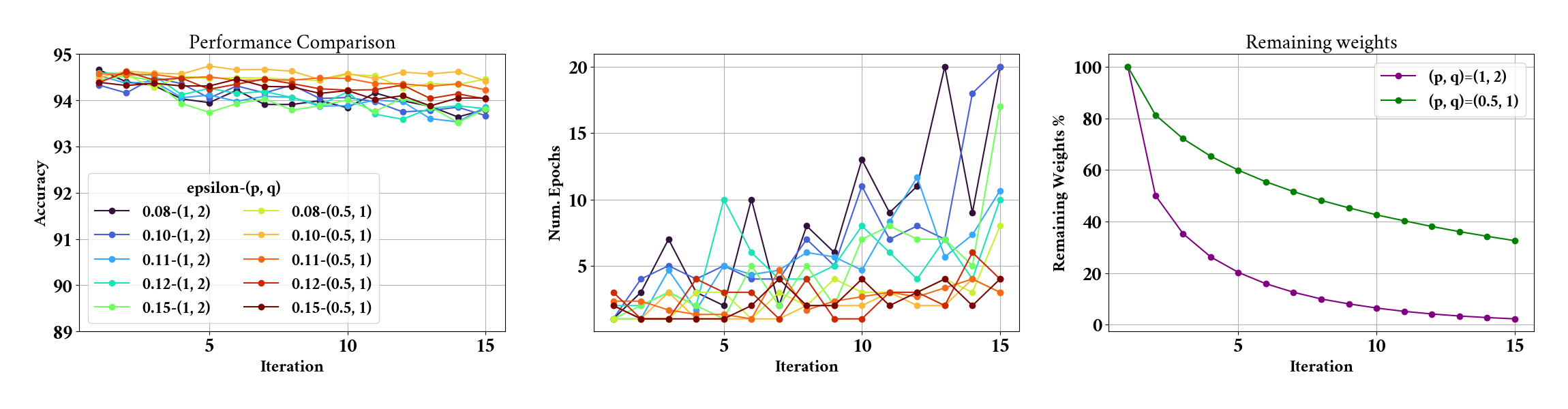}
  \includegraphics[scale=.20]{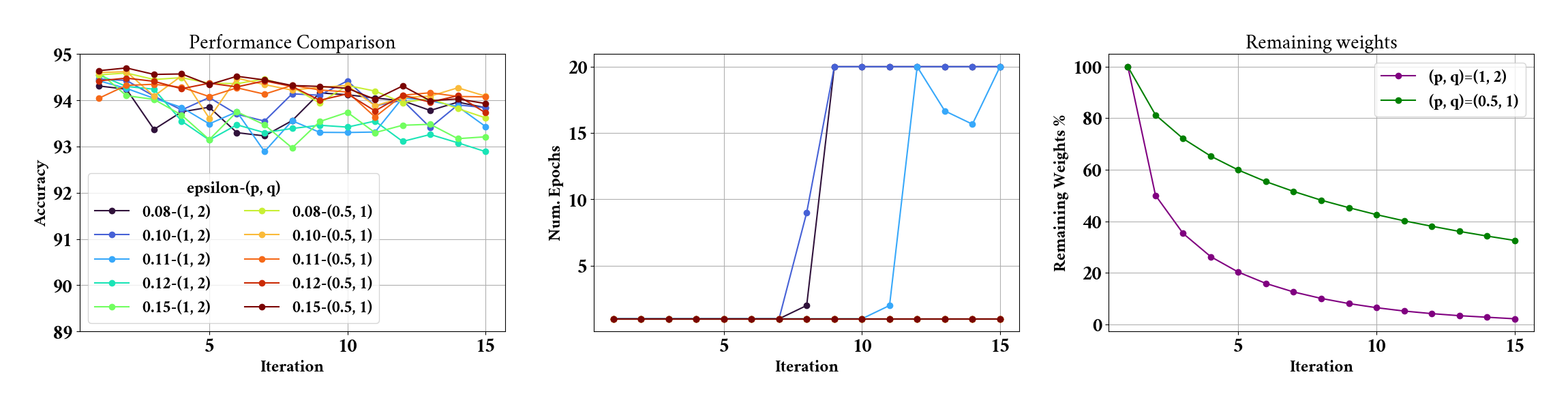}
\caption{ResNet18 on FashionMNIST: Comparing \ourmethod{}-IF (top), and \ourmethod{}-GF (bottom) for various amounts of epsilon for two pairs of $(p, q)$}
\label{fig:ablation}
\end{figure}

Increasing $\epsilon$ allows for greater distance between the optimal network and the sparse network, resulting in a less optimal network solution. However, this trade-off can lead to quicker convergence with fewer training epochs.
Moreover, in this experiment, the sensitivity of \ourmethod{}-IF to higher pruning ratios becomes evident. As the iterations progress and more neurons are pruned from consecutive layers, there is a sudden need for the maximum number of training epochs to counterbalance the performance degradation resulting from the pruning process.

\bibliographystyle{splncs04}
\bibliography{main}

\begin{thebibliography}{10}
\providecommand{\url}[1]{\texttt{#1}}
\providecommand{\urlprefix}{URL }
\providecommand{\doi}[1]{https://doi.org/#1}

\bibitem{ahmad2019can}
Ahmad, S., Scheinkman, L.: How can we be so dense? the benefits of using highly
  sparse representations. arXiv preprint arXiv:1903.11257  (2019)

\bibitem{andle-2022-theor-under}
Andle, J., Sekeh, S.Y.: Theoretical Understanding of the Information Flow on
  Continual Learning Performance, pp. 86--101. Lecture Notes in Computer
  Science, Springer Nature Switzerland (2022).
  \doi{10.1007/978-3-031-19775-8_6},
  \url{http://dx.doi.org/10.1007/978-3-031-19775-8_6}

\bibitem{chen2021only}
Chen, T., Ji, B., Ding, T., Fang, B., Wang, G., Zhu, Z., Liang, L., Shi, Y.,
  Yi, S., Tu, X.: Only train once: A one-shot neural network training and
  pruning framework. Advances in Neural Information Processing Systems
  \textbf{34},  19637--19651 (2021)

\bibitem{deng2012mnist}
Deng, L.: The mnist database of handwritten digit images for machine learning
  research. IEEE Signal Processing Magazine  \textbf{29}(6),  141--142 (2012)

\bibitem{dettmers2019sparse}
Dettmers, T., Zettlemoyer, L.: Sparse networks from scratch: Faster training
  without losing performance. arXiv preprint arXiv:1907.04840  (2019)

\bibitem{diao-2023-prunin-deep}
Diao, E., Wang, G., Zhan, J., Yang, Y., Ding, J., Tarokh, V.: Pruning deep
  neural networks from a sparsity perspective. CoRR  (2023),
  \url{http://arxiv.org/abs/2302.05601v3}

\bibitem{evci2020rigging}
Evci, U., Gale, T., Menick, J., Castro, P.S., Elsen, E.: Rigging the lottery:
  Making all tickets winners. In: International conference on machine learning.
  pp. 2943--2952. PMLR (2020)

\bibitem{evci-2020-gradien-flow}
Evci, U., Ioannou, Y.A., Keskin, C., Dauphin, Y.: Gradient flow in sparse
  neural networks and how lottery tickets win. CoRR  (2020),
  \url{http://arxiv.org/abs/2010.03533v2}

\bibitem{frankle-2018-lotter-ticket-hypot}
Frankle, J., Carbin, M.: The lottery ticket hypothesis: Finding sparse,
  trainable neural networks. CoRR  (2018),
  \url{http://arxiv.org/abs/1803.03635v5}

\bibitem{frankle-2019-linear-mode}
Frankle, J., Dziugaite, G.K., Roy, D.M., Carbin, M.: Linear mode connectivity
  and the lottery ticket hypothesis. CoRR  (2019),
  \url{http://arxiv.org/abs/1912.05671v4}

\bibitem{frankle-2019-stabil-lotter}
Frankle, J., Dziugaite, G.K., Roy, D.M., Carbin, M.: Stabilizing the lottery
  ticket hypothesis. CoRR  (2019), \url{http://arxiv.org/abs/1903.01611v3}

\bibitem{ganesh-2021-slimm-neural}
Ganesh, M.R., Blanchard, D., Corso, J.J., Sekeh, S.Y.: Slimming neural networks
  using adaptive connectivity scores (2021),
  \url{http://arxiv.org/abs/2006.12463}

\bibitem{ganesh2022slimming}
Ganesh, M.R., Blanchard, D., Corso, J.J., Sekeh, S.Y.: Slimming neural networks
  using adaptive connectivity scores. IEEE Transactions on Neural Networks and
  Learning Systems  (2022)

\bibitem{ganesh2024slimming}
Ganesh, M.R., Blanchard, D., Corso, J.J., Sekeh, S.Y.: Slimming neural networks
  using adaptive connectivity scores. IEEE transactions on neural networks and
  learning systems  \textbf{35}(3),  3794--3808 (2024)

\bibitem{ganesh2021mint}
Ganesh, M.R., Corso, J.J., Sekeh, S.Y.: Mint: Deep network compression via
  mutual information-based neuron trimming. In: 2020 25th International
  Conference on Pattern Recognition (ICPR). pp. 8251--8258. IEEE Computer
  Society (2021)

\bibitem{haider-2020-compr-onlin}
Haider, M.U., Taj, M.: Comprehensive online network pruning via learnable
  scaling factors. CoRR  (2020), \url{http://arxiv.org/abs/2010.02623v1}

\bibitem{han-2015-learn-both}
Han, S., Pool, J., Tran, J., Dally, W.J.: Learning both weights and connections
  for efficient neural networks. CoRR  (2015),
  \url{http://arxiv.org/abs/1506.02626v3}

\bibitem{he2016deep}
He, K., Zhang, X., Ren, S., Sun, J.: Deep residual learning for image
  recognition. In: Proceedings of the IEEE conference on computer vision and
  pattern recognition. pp. 770--778 (2016)

\bibitem{hinton2012deep}
Hinton, G., Deng, L., Yu, D., Dahl, G.E., Mohamed, A.r., Jaitly, N., Senior,
  A., Vanhoucke, V., Nguyen, P., Sainath, T.N., et~al.: Deep neural networks
  for acoustic modeling in speech recognition: The shared views of four
  research groups. IEEE Signal processing magazine  \textbf{29}(6),  82--97
  (2012)

\bibitem{hu2016network}
Hu, H., Peng, R., Tai, Y.W., Tang, C.K.: Network trimming: A data-driven neuron
  pruning approach towards efficient deep architectures. arXiv preprint
  arXiv:1607.03250  (2016)

\bibitem{ioffe2015batch}
Ioffe, S., Szegedy, C.: Batch normalization: Accelerating deep network training
  by reducing internal covariate shift. In: International conference on machine
  learning. pp. 448--456. pmlr (2015)

\bibitem{jin-2020-how-does}
Jin, G., Yi, X., Zhang, L., Zhang, L., Schewe, S., Huang, X.: How does weight
  correlation affect the generalisation ability of deep neural networks. CoRR
  (2020), \url{http://arxiv.org/abs/2010.05983v3}

\bibitem{krizhevsky2010convolutional}
Krizhevsky, A., Hinton, G.: Convolutional deep belief networks on cifar-10.
  Unpublished manuscript  \textbf{40}(7), ~1--9 (2010)

\bibitem{krizhevsky2012imagenet}
Krizhevsky, A., Sutskever, I., Hinton, G.E.: Imagenet classification with deep
  convolutional neural networks. Advances in neural information processing
  systems  \textbf{25} (2012)

\bibitem{lecun1989optimal}
LeCun, Y., Denker, J., Solla, S.: Optimal brain damage. Advances in neural
  information processing systems  \textbf{2} (1989)

\bibitem{lee-2019-snip}
Lee, N., Ajanthan, T., Torr, P.H.S.: {SNIP}: Single-shot network pruning based
  on connection sensitivity (2019), \url{http://arxiv.org/abs/1810.02340}

\bibitem{lin2018accelerating}
Lin, S., Ji, R., Li, Y., Wu, Y., Huang, F., Zhang, B.: Accelerating
  convolutional networks via global \& dynamic filter pruning. In: IJCAI.
  vol.~2, p.~8. Stockholm (2018)

\bibitem{luo2017thinet}
Luo, J.H., Wu, J., Lin, W.: Thinet: A filter level pruning method for deep
  neural network compression. In: Proceedings of the IEEE international
  conference on computer vision. pp. 5058--5066 (2017)

\bibitem{malach2020proving}
Malach, E., Yehudai, G., Shalev-Schwartz, S., Shamir, O.: Proving the lottery
  ticket hypothesis: Pruning is all you need. In: International Conference on
  Machine Learning. pp. 6682--6691. PMLR (2020)

\bibitem{morcos2019one}
Morcos, A., Yu, H., Paganini, M., Tian, Y.: One ticket to win them all:
  generalizing lottery ticket initializations across datasets and optimizers.
  Advances in neural information processing systems  \textbf{32} (2019)

\bibitem{orseau2020logarithmic}
Orseau, L., Hutter, M., Rivasplata, O.: Logarithmic pruning is all you need.
  Advances in Neural Information Processing Systems  \textbf{33},  2925--2934
  (2020)

\bibitem{otter2020survey}
Otter, D.W., Medina, J.R., Kalita, J.K.: A survey of the usages of deep
  learning for natural language processing. IEEE transactions on neural
  networks and learning systems  \textbf{32}(2),  604--624 (2020)

\bibitem{pouyanfar2018survey}
Pouyanfar, S., Sadiq, S., Yan, Y., Tian, H., Tao, Y., Reyes, M.P., Shyu, M.L.,
  Chen, S.C., Iyengar, S.S.: A survey on deep learning: Algorithms, techniques,
  and applications. ACM Computing Surveys (CSUR)  \textbf{51}(5),  1--36 (2018)

\bibitem{simonyan2014very}
Simonyan, K., Zisserman, A.: Very deep convolutional networks for large-scale
  image recognition. arXiv preprint arXiv:1409.1556  (2014)

\bibitem{srivastava2014dropout}
Srivastava, N., Hinton, G., Krizhevsky, A., Sutskever, I., Salakhutdinov, R.:
  Dropout: a simple way to prevent neural networks from overfitting. The
  journal of machine learning research  \textbf{15}(1),  1929--1958 (2014)

\bibitem{tessera2021keep}
Tessera, K.a., Hooker, S., Rosman, B.: Keep the gradients flowing: Using
  gradient flow to study sparse network optimization. arXiv preprint
  arXiv:2102.01670  (2021)

\bibitem{wang2020picking}
Wang, C., Zhang, G., Grosse, R.: Picking winning tickets before training by
  preserving gradient flow. arXiv preprint arXiv:2002.07376  (2020)

\bibitem{wen2016learning}
Wen, W., Wu, C., Wang, Y., Chen, Y., Li, H.: Learning structured sparsity in
  deep neural networks. Advances in neural information processing systems
  \textbf{29} (2016)

\bibitem{wunder2021reverse}
Wunder, G., Gross, B., Fritschek, R., Schaefer, R.F.: A reverse jensen
  inequality result with application to mutual information estimation. In: 2021
  IEEE Information Theory Workshop (ITW). pp.~1--6. IEEE (2021)

\bibitem{xiao2017fashion}
Xiao, H., Rasul, K., Vollgraf, R.: Fashion-mnist: a novel image dataset for
  benchmarking machine learning algorithms. arXiv preprint arXiv:1708.07747
  (2017)

\bibitem{you2019drawing}
You, H., Li, C., Xu, P., Fu, Y., Wang, Y., Chen, X., Baraniuk, R.G., Wang, Z.,
  Lin, Y.: Drawing early-bird tickets: Towards more efficient training of deep
  networks. arXiv preprint arXiv:1909.11957  (2019)

\bibitem{you2019gate}
You, Z., Yan, K., Ye, J., Ma, M., Wang, P.: Gate decorator: Global filter
  pruning method for accelerating deep convolutional neural networks. Advances
  in neural information processing systems  \textbf{32} (2019)

\bibitem{zhou2018online}
Zhou, Z., Zhou, W., Hong, R., Li, H.: Online filter weakening and pruning for
  efficient convnets. In: 2018 IEEE International Conference on Multimedia and
  Expo (ICME). pp.~1--6. IEEE (2018)

\bibitem{zullich2021speeding}
Zullich, M., Medvet, E., Pellegrino, F.A., Ansuini, A.: Speeding-up pruning for
  artificial neural networks: introducing accelerated iterative magnitude
  pruning. In: 2020 25th International Conference on Pattern Recognition
  (ICPR). pp. 3868--3875. IEEE (2021)

\end{thebibliography}

\end{document}